\PassOptionsToPackage{preprint}{aistats2027}
\def\arxivversion{}
\documentclass[twoside,letterpaper]{article}
\PassOptionsToPackage{table}{xcolor}
\usepackage{aistats2027}
\usepackage[round]{natbib}
\setcitestyle{citesep={,}}
\renewcommand{\bibsection}{\subsubsection*{References}}
\usepackage[T1]{fontenc}
\usepackage{amsmath,amssymb,amsthm}
\usepackage{graphicx,booktabs,tabularx,longtable,array}
\usepackage{float,placeins}
\usepackage{algorithm,algpseudocode}
\usepackage[table]{xcolor}
\usepackage{tikz}
\usepackage{marginnote}
\usetikzlibrary{arrows.meta,positioning}
\usepackage{pgfplots}
\usepgfplotslibrary{groupplots}
\pgfplotsset{compat=1.18}
\tikzset{every picture/.append style={font=\normalfont\scriptsize}}
\pgfplotsset{every axis/.append style={
  tick label style={font=\normalfont\scriptsize},
  label style={font=\normalfont\scriptsize},
  title style={font=\normalfont\scriptsize}}}
\usepackage{microtype}
\usepackage{hyperref,url}
\hypersetup{hidelinks}
\graphicspath{{./}{figures/}}
\newcommand{\method}{\textnormal{\textsc{NeuralZip}}}
\newcommand{\zipnn}{\textnormal{\textsc{ZipNN}}}
\newcommand{\dfloat}{\textnormal{\textsc{DFloat11}}}
\newcommand{\huffllm}{\textnormal{\textsc{Huff-LLM}}}
\newcommand{\fast}{\textnormal{\textsc{Sharing Only}}}
\newcommand{\ratio}{\textnormal{\textsc{Packing Only}}}

\newtheorem{lemma}{Lemma}
\newtheorem{theorem}{Theorem}
\newcounter{marginnotecount}

\begin{document}
\raggedbottom
\ifdefined\arxivversion
\runningtitle{NeuralZip: Reusable Codebooks for Model Compression}
\runningauthor{Bravo, Horváth, Navarro, and Abeliuk}
\fi
\twocolumn[
\aistatstitle{NeuralZip: Reusable Setup for Fast Lossless Compression}
\ifdefined\arxivversion
\aistatsauthor{Martín Bravo\textsuperscript{1,2,3,*} \And Samuel Horváth\textsuperscript{1} \And Gonzalo Navarro\textsuperscript{2} \And Andrés Abeliuk\textsuperscript{2,3}}
\aistatsaddress{\textsuperscript{1}Mohamed bin Zayed University of Artificial Intelligence, Abu Dhabi, UAE\\
\textsuperscript{2}Department of Computer Science, University of Chile, Santiago, Chile\\
\textsuperscript{3}National Center for Artificial Intelligence (CENIA), Santiago, Chile}
\else
\aistatsauthor{Anonymous authors}
\aistatsaddress{Anonymous institution}
\fi
]
\ifdefined\arxivversion
{\renewcommand{\thefootnote}{*}\footnotetext{Correspondence to: \href{mailto:martin.bravo@mbzuai.ac.ae}{\texttt{martin.bravo@mbzuai.ac.ae}}.}}
\fi
\begin{abstract}

Lossless compression can reduce the storage and movement of model weights without changing their floating-point values, but repeated statistical analysis and code construction add computational overhead. We study whether the statistical structure of exponents can be prepared once and reused. For this, we introduce \method{}, which groups chunks with similar exponent distributions, shares Huffman codes, and selectively represents recurring exponent tuples using packed exponents, thereby achieving additional moderate compression ratios. A setup chooses these representations before subsequent encodings, while every encoding still processes the current tensor values. In floating-point model checkpoints, post-setup compression is 1.81--21.33$\times$ faster than the baselines and achieves exact bit-to-bit reconstruction.
We show that this setup can be precomputed and transferred from another compatible architecture, preserving similar compression ratios and avoiding the need to amortize setup costs.
Therefore, compression adaptation is transferable and reusable. Training checkpoints demonstrate continued reuse as the weights evolve. Finally, GPU experiments reduce active memory usage by up to 27.5\% while reproducing the logits exactly.


\end{abstract}
\section{Introduction}
\label{sec:paper-introduction}
Modern society increasingly relies on intelligent systems, which have motivated the training of models at a very large scale. These models solve problems in areas such as medicine~\citep{singhal2023large}, education~\citep{kasneci2023chatgpt}, and software development~\citep{chen2021evaluating}. Foundation models, in particular, have undergone a rapid transformation in recent years, with substantial improvements in performance, precision, and practical usefulness~\citep{bommasani2021opportunities,achiam2023gpt}.

The exponential growth in the size of foundation models has introduced significant challenges in storage and energy consumption~\citep{hershcovitch2025zipnn,patterson2021carbon}. 
Given this, compression has become a natural partial solution to this problem.
For example, deployment is difficult on devices with limited resources, such as mobile phones, embedded systems, local agents, and edge environments.
Compression can allow resource-constrained edge devices to load models that would otherwise be too large.
Similarly, model hubs distribute checkpoints to inference machines at increasing scale. Nearly three million public model repositories and approximately two billion downloads across the Qwen family were recorded during the first seven months of 2026~\citep{hf_state_of_open_models_summer2026}.
Therefore, it is important to find techniques that allow compression to be fast and efficient.

To address this, several compression techniques have been developed, such as \textit{quantization}~\citep{gholami2021survey,gray1984vector},
which reduces numerical precision to decrease the model size and accelerate inference.
\textit{Pruning}~\citep{lecun1989optimal,zhu2017prune}
removes low-relevance connections or parameters. 
\textit{Knowledge distillation}~\citep{gou2021knowledge}
trains a smaller model to imitate a larger one.
\textit{Low-rank factorization}~\citep{saha2024caldera,sundrani2024llrc}
and parameter sharing reduce redundancy in model parameters,
and \textit{sparse or selectively activated models}~\citep{ma2021sparsification}
use only a fraction of their parameters during inference.

These techniques introduce important limitations. First, they do not preserve the original model: after compression, the exact original weights cannot be recovered unless a copy is stored. Second, the reduction is lossy, so it can degrade accuracy or generation quality. Third, they may require additional training, which adds computational cost. These limitations motivate studying compression methods that reduce storage while keeping the original parameters exactly recoverable and enabling fast compression of models already available in model hubs and on end devices.

\begin{figure}[t]
\centering
\begin{tikzpicture}
\begin{groupplot}[
    group style={group size=3 by 1, horizontal sep=0.09\linewidth},
    width=0.22\linewidth, height=0.22\linewidth, scale only axis,
    tick label style={font=\normalfont\scriptsize}, label style={font=\normalfont\scriptsize},
    title style={font=\normalfont\scriptsize},
    xmin=-0.6, xmax=5.6, ymin=0,
    xtick={0,1,2,3,4,5},
    xticklabels={1,2,3,4,5,6},
    tick align=outside, tick pos=left,
    ymajorgrids, grid style={gray!20},
]
\nextgroupplot[title={Sign}, ylabel={Entropy (bits)},
    ymax=1.1, ytick={0,0.5,1}]
\addplot[ybar, bar width=3pt, draw=black!60, fill=green!40]
    table[col sep=comma, x expr=\coordindex, y=sign]{data/plots/zero_order_entropy.csv};
\addplot[gray, dashed, sharp plot, domain=-0.6:5.6, samples=2] {1};
\nextgroupplot[title={Exponent}, ymax=8.8, ytick={0,4,8}]
\addplot[ybar, bar width=3pt, draw=black!60, fill=red!35]
    table[col sep=comma, x expr=\coordindex, y=exponent]{data/plots/zero_order_entropy.csv};
\addplot[gray, dashed, sharp plot, domain=-0.6:5.6, samples=2] {8};
\nextgroupplot[title={Mantissa}, ymax=7.7, ytick={0,3.5,7}]
\addplot[ybar, bar width=3pt, draw=black!60, fill=blue!30]
    table[col sep=comma, x expr=\coordindex, y=mantissa]{data/plots/zero_order_entropy.csv};
\addplot[gray, dashed, sharp plot, domain=-0.6:5.6, samples=2] {7};
\end{groupplot}
\end{tikzpicture}
\par\smallskip
{\normalfont\scriptsize\setlength{\tabcolsep}{2pt}
\begin{tabular}{@{}p{0.47\linewidth}p{0.47\linewidth}@{}}
1: Qwen 3.5 0.8B & 2: Gemma 3 1B \\
3: DINOv3 ConvNeXt-B & 4: DINOv3 ViT-B/16 \\
5: SigLIP 2 Base & 6: Parakeet TDT 0.6B v3
\end{tabular}\par}
\caption{Empirical entropy by floating-point field in BF16.}
\label{fig:skew}
\label{fig:ablation-zero-order-entropy}
\end{figure}

A notable example of a \emph{lossless} compressor is \zipnn{}~\citep{hershcovitch2025zipnn}, which exploits the skewed distribution of floating-point exponents and compresses them using Huffman coding. \zipnn{} reports strong compression results across vision and language models while preserving the original parameters exactly. Another relevant approach is \dfloat{}~\citep{zhang2025dfloat11}, which introduces a dynamic-length floating-point representation for efficient GPU inference while preserving the model parameters exactly.
These methods show that lossless compression is both feasible and useful, but their compression procedures can incur substantial computational overhead. For example, recomputing similar codebooks as if they were different distributions leaves an open question: Can we exploit all the statistical regularity available in floating-point weights?

To address this, we identify and empirically characterize two useful properties. First, exponent distributions vary among parameter blocks while forming similar groups, a property we study here as \textit{the heterogeneity of the exponents}. Sharing precomputed codebooks across these groups can accelerate compression, but may increase the exponent storage ratio. To offset this cost, we show that grouping adjacent exponent values into new symbols can reveal local regularities, a property we call \textit{local exponent redundancy}.

Using this, we propose \method{} to take advantage of the two phenomena observed in the evaluated architectures. We combine both ideas in one algorithm: Jensen--Shannon clustering enables Huffman-code sharing while activating packed exponents. On top of that, we show that this setup is sharable across different model architectures. In summary, our contributions\footnote{\ifdefined\arxivversion We provide the code in the \href{https://github.com/MartinEBravo/nzip}{\mbox{GitHub repository}}.\else We provide the code in the anonymized repository: \url{https://anonymous.4open.science/r/neuralzip/}\fi} are:

\begingroup
\setlength{\topsep}{0pt}
\setlength{\partopsep}{0pt}
\setlength{\parskip}{0pt}
\begin{enumerate}
\setlength{\itemsep}{0pt}
\setlength{\parsep}{0pt}
\setlength{\parskip}{0pt}
\item We show evidence of different exponent distribution groups and local redundancy in floating-point model weights.
\item We propose a lossless compressor that combines shared codebooks with exponent packing to reduce repeated computation while retaining moderate compression ratios.
\item We propose a setup-transfer mechanism that enables compression of compatible target models without target-side codebook construction.
\end{enumerate}
\endgroup

\section{Related work}
\label{sec:paper-related}

\paragraph{Model compression.}
Pruning~\citep{zhu2017prune}, quantization~\citep{gholami2021survey}, and
distillation~\citep{gou2021knowledge} reduce model cost by changing parameters
or architecture. Lossless compression instead preserves the chosen weight
representation bit for bit. Whereas general-purpose compressors such as
\textsc{zlib} and \textsc{zstd}~\citep{deutsch1996zlib,collet2018zstandard}
combine dictionary matching with entropy coding, weight-aware methods expose
floating-point structure explicitly. \zipnn{}~\citep{hershcovitch2025zipnn}
reorders bits and bytes before local Huffman coding. LMC~\citep{waddington2025lmc}
combines byte grouping with block-adaptive parallel Huffman coding for
snapshots and checkpoint deltas. Both adapt codes to input blocks, motivating
the question of when code construction can be shared rather than repeated.

\paragraph{Codebook sharing.} Shannonic~\citep{ibrahim2026shannonic} also prepares reusable coding tables,
using range-partitioned tANS for communication of eight-bit quantized tensors 
in federated learning.
Its evaluated representation differs from our bit-exact floating-point weights.
Thus, precomputed tables are an established idea. The distinction here is how
sharing groups are discovered.
Sharing marginal distributions does not itself exploit dependencies between
adjacent exponents. MM-RePair~\citep{ferragina2022improving} exploits repeated
patterns through grammars and supports matrix--vector multiplication over
the compressed representation. Rather than constructing a recursive grammar,
\method{} adds frequent packed exponent symbols to a shared Huffman alphabet alongside
singletons. This combines distribution sharing with local redundancy while
retaining exact reconstruction.

\paragraph{Compressed Inference.}
Compressed execution poses a complementary problem: reconstructing weights
quickly enough to make memory savings useful. NeuZip~\citep{hao2024neuzip}
uses ANS-coded BF16 exponents for lossless training, with a separate lossy
mantissa variant. \dfloat{}~\citep{zhang2025dfloat11} makes BF16 Huffman
reconstruction practical on GPUs through lookup and restart structures.
ECF8~\citep{yang2026ecf8} couples exponent coding with tensor management for
weights already in FP8. Huff-LLM~\citep{yubeaton2025huff} integrates Huffman
decompressors into custom accelerators. These works establish exponent
compression and on-demand reconstruction, on which our GPU inference builds.

\begin{table}[H]
\centering
\scriptsize
\setlength{\tabcolsep}{3pt}
\begin{tabularx}{\linewidth}{*{3}{>{\raggedright\arraybackslash}X}}
\toprule
Compressor & Coding mechanism & Primary objective \\
\midrule
\zipnn{} \citep{hershcovitch2025zipnn} & Local Huffman codes with raw fallback & Model storage and transfer \\
LMC \citep{waddington2025lmc} & Parallel Huffman coding & High-throughput checkpointing \\
\midrule
\textsc{Shannonic} \citep{ibrahim2026shannonic} & Range-partitioned tANS for eight-bit tensors & Federated and edge--cloud communication \\
\midrule
\textsc{NeuZip} \citep{hao2024neuzip} & ANS-coded BF16 exponents & Memory-efficient training and inference \\
\dfloat{} \citep{zhang2025dfloat11} & BF16 exponent Huffman coding with GPU lookup structures & GPU inference \\
\textsc{ECF8} \citep{yang2026ecf8} & Entropy-coded exponents of FP8 weights & Lossless FP8 inference \\
\huffllm{} \citep{yubeaton2025huff} & Huffman decompressors placed near compute units & Accelerator inference \\
\midrule
\textsc{MM-RePair} \citep{ferragina2022improving} & RePair grammar over a matrix representation & Matrix--vector multiplication over compressed matrices \\
\bottomrule
\end{tabularx}
\caption{Lossless Model Compression comparison.}
\label{tab:related-lossless-compressors}
\end{table}

\begin{figure*}[t]
\centering
\begin{minipage}[t]{0.48\textwidth}
\vspace{0pt}
\centering
\makebox[0.33\linewidth]{\includegraphics{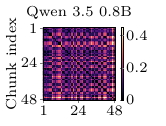}}%
\makebox[0.33\linewidth]{\includegraphics{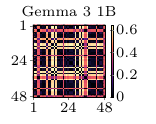}}%
\makebox[0.33\linewidth]{\includegraphics{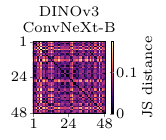}}
\par
\makebox[0.33\linewidth]{\includegraphics{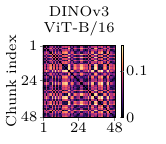}}%
\makebox[0.33\linewidth]{\includegraphics{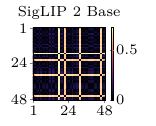}}%
\makebox[0.33\linewidth]{\includegraphics{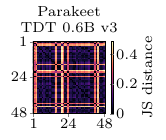}}
\par\vspace{0.1cm}
{\normalfont\scriptsize Chunk index\par}
\caption{Jensen--Shannon distances between exponent histograms of
48 sampled chunks per model.}
\label{fig:paper-js}
\end{minipage}\hfill
\begin{minipage}[t]{0.48\textwidth}
\vspace{0pt}
\centering
\begin{tikzpicture}
\definecolor{qwenline}{HTML}{1F77B4}
\definecolor{gemmaline}{HTML}{FF7F0E}
\definecolor{dinoconvnextline}{HTML}{2CA02C}
\definecolor{dinovitline}{HTML}{D62728}
\definecolor{siglipline}{HTML}{9467BD}
\definecolor{parakeetline}{HTML}{8C564B}
\begin{axis}[
    width=0.85\linewidth, height=0.35\linewidth, scale only axis,
    tick label style={font=\normalfont\scriptsize}, label style={font=\normalfont\scriptsize},
    title style={font=\normalfont\scriptsize},
    xlabel={Packing size}, ylabel={Entropy (bits)},
    xmin=0.65, xmax=8.35, ymin=2.28, ymax=3.4,
    xtick={1,2,4,8}, ytick={2.4,2.6,2.8,3.0,3.2,3.4},
    tick align=outside, tick pos=left, ymajorgrids, grid style={gray!25},
    legend cell align=left, legend columns=3,
    legend style={at={(0.5,-0.32)}, anchor=north,
        draw=gray!35, fill=white, font=\normalfont\scriptsize},
    every axis plot/.append style={thick, dashed, mark=*, mark size=1.6pt},
    unbounded coords=discard, filter discard warning=false,
]
\addplot[qwenline, restrict expr to domain={\coordindex}{0:3}]
    table[col sep=comma, x=n, y=bits_per_exponent]{data/plots/ngram_exponent_entropy.csv};
\addlegendentry{\shortstack[l]{Qwen 3.5\\0.8B}}
\addplot[gemmaline, restrict expr to domain={\coordindex}{4:7}]
    table[col sep=comma, x=n, y=bits_per_exponent]{data/plots/ngram_exponent_entropy.csv};
\addlegendentry{Gemma 3 1B}
\addplot[dinoconvnextline, restrict expr to domain={\coordindex}{8:11}]
    table[col sep=comma, x=n, y=bits_per_exponent]{data/plots/ngram_exponent_entropy.csv};
\addlegendentry{\shortstack[l]{DINOv3\\ConvNeXt-B}}
\addplot[dinovitline, restrict expr to domain={\coordindex}{12:15}]
    table[col sep=comma, x=n, y=bits_per_exponent]{data/plots/ngram_exponent_entropy.csv};
\addlegendentry{\shortstack[l]{DINOv3\\ViT-B/16}}
\addplot[siglipline, restrict expr to domain={\coordindex}{16:19}]
    table[col sep=comma, x=n, y=bits_per_exponent]{data/plots/ngram_exponent_entropy.csv};
\addlegendentry{SigLIP 2 Base}
\addplot[parakeetline, restrict expr to domain={\coordindex}{20:23}]
    table[col sep=comma, x=n, y=bits_per_exponent]{data/plots/ngram_exponent_entropy.csv};
\addlegendentry{\shortstack[l]{Parakeet TDT\\0.6B v3}}
\end{axis}
\end{tikzpicture}
\par\vspace{-18pt}
\caption{Local exponent redundancy entropy.}
\label{fig:ablation-ngram-entropy-gain}
\end{minipage}
\end{figure*}

\section{Preliminaries}
\label{sec:paper-statistics}

We now show the property that allows lossless compressors such as \zipnn{} and \dfloat{} to work, and we introduce the idea of exponent skewing. For a distribution $p$ on a finite alphabet $\mathcal{A}$, let us define the entropy
$H(p)=-\sum_{a \in \mathcal{A}} p(a)\log p(a)$, where logarithms are base 2. Entropy measures the expected amount of information needed to describe the state of a variable~\citep{navarro2016compact}.

\textbf{Skewed Exponent field.} We measure the entropy of the sign, exponent, and mantissa fields of the weights of BF16
models shown in Figure~\ref{fig:ablation-zero-order-entropy}. The exponent entropy is $2.691$-$3.345$ bits, while the sign and
mantissa entropies remain near $1$ and $6.97$ bits. For BF16, we use 1 bit for the sign field, 8 bits for the exponent, and 7 bits for the mantissa. Therefore, the sign and the mantissa fully utilize the bits, but the exponent, on average, uses only 2.7-3.4 out of 8 bits. Thus, entropy-based compressors, such as Huffman codes, can take advantage of this property.

Exponent concentration is an empirical observation in the models studied here, our hypothesis is that it is a property caused by the optimizer. In Appendix~\ref{tab:optimizers}, we measure how exponent entropy changes during early training under several optimizer configurations.

One could take a linear projection of all the exponents of the model weights and compress them using a single Huffman code, but that might not be the most efficient solution.
For any non-empty finite-alphabet sequence
$S$, define $\widehat p_S(a)=|\{i:S_i=a\}|/|S|$ and the normalized entropy
$H_0(S)=H(\widehat p_S)$. The exponent alphabet is
$\Sigma_E=\{0,\ldots,2^E-1\}$, where $E\geq1$ is an integer.
For a partition $\mathcal P$ into non-empty ordered subsequences, the ideal zero-order coding cost in bits is:
\[
C_0(\mathcal P)=\sum_{S\in\mathcal P}|S|H_0(S).
\]

\begin{lemma}[Cost monotonicity under refinement]
\label{lem:stream-partition-bound}
If $\mathcal P_\mathcal{C}$ refines $\mathcal P$ by splitting its sequence into chunks, preserving their order, and omitting empty chunks,
then $C_0(\mathcal P_\mathcal{C})\leq C_0(\mathcal P)$.
\end{lemma}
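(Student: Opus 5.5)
It suffices to treat a single sequence $S\in\mathcal P$ that is split into consecutive non-empty chunks $S^{(1)},\dots,S^{(k)}$. The cost $C_0$ is additive over the blocks of a partition, and the unsplit sequences of $\mathcal P$ contribute identically to both sides. So the claim reduces to the per-sequence inequality
\[
\sum_{j=1}^{k}|S^{(j)}|\,H_0\bigl(S^{(j)}\bigr)\;\leq\;|S|\,H_0(S).
\]
Because empty chunks are omitted, every $H_0(S^{(j)})$ is well defined. Order plays no role, since $H_0$ depends only on symbol counts.

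The key observation is that the empirical distribution of $S$ is a mixture of the chunk distributions. Set $w_j=|S^{(j)}|/|S|$ and $p_j=\widehat p_{S^{(j)}}$. Since the chunks partition the positions of $S$, the counts add, $|\{i:S_i=a\}|=\sum_j |S^{(j)}|\,p_j(a)$. Hence $\widehat p_S=\sum_j w_j p_j$, with $w_j>0$ and $\sum_j w_j=1$.

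The inequality is then concavity of entropy: $H\bigl(\sum_j w_j p_j\bigr)\geq \sum_j w_j H(p_j)$. Multiplying by $|S|$ gives the per-sequence bound. To make this self-contained rather than just citing concavity, I will prove it directly. The gap is
\[
H\Bigl(\sum_j w_jp_j\Bigr)-\sum_j w_jH(p_j)=\sum_j w_j\,D\bigl(p_j\,\|\,\widehat p_S\bigr).
\]
Each term is nonnegative by Gibbs' inequality. The divergence is finite because $p_j(a)>0$ implies $\widehat p_S(a)>0$. Equivalently, the gap is the mutual information between the chunk index and the symbol. The identity follows by expanding $-\sum_a \widehat p_S(a)\log\widehat p_S(a)$ as $-\sum_j w_j\sum_a p_j(a)\log \widehat p_S(a)$ and adding and subtracting $\sum_j w_j\sum_a p_j(a)\log p_j(a)$, with the convention $0\log 0=0$.

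Finally, I sum the per-sequence inequality over all sequences of $\mathcal P$ to obtain $C_0(\mathcal P_\mathcal C)\le C_0(\mathcal P)$. There is no real obstacle here. The only care needed is in bookkeeping the $0\log0$ conventions and the support condition for the divergence, and in noting that the refinement may split different sequences of $\mathcal P$ into different numbers of chunks. The argument handles each sequence independently, so this causes no issue.
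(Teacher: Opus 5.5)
Your proof is correct and follows the same route as the paper: you write $\widehat p_S$ as the length-weighted mixture of the chunk histograms, apply concavity of entropy to each split sequence, and sum over $\mathcal P$. The only difference is that you verify concavity yourself via the identity $H\bigl(\sum_j w_j p_j\bigr)-\sum_j w_j H(p_j)=\sum_j w_j D_{\rm KL}\bigl(p_j\,\|\,\widehat p_S\bigr)\geq 0$, whereas the paper simply cites it.
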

\begin{proof}
For each sequence $S$ split into $S_1,\ldots,S_k$, let
$\lambda_i=|S_i|/|S|$. Symbol counts give
$\widehat p_S=\sum_i\lambda_i\widehat p_{S_i}$, so the concavity of entropy~\citep{cover2006elements} yields
$|S|H_0(S)\geq\sum_i|S_i|H_0(S_i)$. Summing over $S\in\mathcal P$
proves the claim.
\end{proof}

Methods such as \zipnn{} exploit this property by dividing tensors into small chunks of exponents and compressing each chunk using Huffman coding. This allows the codebooks to reflect local exponent distributions rather than relying on a single global codebook.
In Appendix~\ref{tab:level-selection}, we explore how much we save if we vary the refinement of these partitions.
%
The problem with this approach is that it must create a codebook for each chunk, ignoring statistical structures that may arise in tensors and could save the recomputation of equivalent codebooks. Next, we introduce these additional properties.

\textbf{Heterogeneity of the exponents.}
For the fraction $u$ with exponent values of $N_u$, let
$p_u(a)=\frac{1}{N_u}\sum_i1(e_i=a)$ be its normalized histogram, where $1(x)$ is the indicator function, which equals 1 if the condition $x$ is true and 0 otherwise. We measure
histogram dissimilarity using the square root of Jensen--Shannon divergence~\citep{lin1991divergence}:
\begin{equation}
 \begin{gathered}
 D_{\rm JS}(p,q)=
 \sqrt{\tfrac12D_{\rm KL}(p\Vert m)+\tfrac12D_{\rm KL}(q\Vert m)},\\
 \text{where }m=\tfrac12(p+q),
 \end{gathered}
 \label{eq:paper-js}
\end{equation}
From the 12 largest tensors in each model, we randomly choose 48 chunks of size 256 KiB among six different architectures (see
Figure~\ref{fig:paper-js}). $14.3$--$49.5\%$ of the comparisons
fall below a distance threshold of $0.05$.
We observe both near-identical and widely separated
histograms.
Lemma~\ref{lem:stream-partition-bound} shows that splitting tensors into chunks does not increase entropy.

To measure actual gains, we employ greedy clustering with a threshold $\tau$ on the Jensen--Shannon distance between a chunk's exponent histogram and compatible cluster centroids, assigning the chunk to the nearest centroid if $D_{JS} \leq \tau$. Sharing codebooks can increase the exponent storage ratio but enables faster compression with precomputed tables (see Table~\ref{tab:paper-transfer}). Therefore, we examine another property that will help reduce storage while retaining precomputed shared codebooks.

\textbf{Local exponent redundancy.}
For the longest prefix whose length is divisible by $n$, we can represent non-overlapping
tuples of $n$ consecutive exponent symbols as compound
symbols. 
By packing two and four values, we lower the empirical entropy by
$0.007$ -- $0.235$ and $0.016$ -- $0.429$ bits per exponent, respectively
(Figure~\ref{fig:ablation-ngram-entropy-gain}). 


Fix an integer $n\geq1$ and assume that every sequence length is divisible by $n$.
For $S=(x_1,\ldots,x_t)$, let $G_n(S)$ contain the $t/n$ non-overlapping tuples
$(x_{(j-1)n+1},\ldots,x_{jn})$, $1\leq j\leq t/n$.
A bijective integer representation of these tuples leaves their entropy
unchanged. Define $H_0^{(n)}(S)=H_0(G_n(S))$ and write the packed coding cost as:
\[
C_0^{(n)}(\mathcal P)
=\sum_{S\in\mathcal P}\frac{|S|}{n}H_0^{(n)}(S).
\]

\begin{lemma}[Cost monotonicity under packing]
\label{lem:exponent-ngram-bound}
For every integer $n\geq1$ that divides the length of each $S\in\mathcal P$,
$C_0^{(n)}(\mathcal P)\leq C_0(\mathcal P)$.
\end{lemma}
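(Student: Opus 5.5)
Since $C_0^{(n)}(\mathcal P)$ and $C_0(\mathcal P)$ are both sums over the same parts $S\in\mathcal P$, it suffices to prove, for a single sequence $S$ of length $t$ divisible by $n$, that
\[
\frac{t}{n}H_0\bigl(G_n(S)\bigr)\leq t\,H_0(S),\qquad\text{i.e.}\qquad H_0\bigl(G_n(S)\bigr)\leq n\,H_0(S),
\]
and then sum over $\mathcal P$. The plan is to view the empirical tuple distribution $\widehat p_{G_n(S)}$ as the joint law of a random vector $(X_1,\ldots,X_n)$ on $\Sigma_E^n$: pick a tuple index $J$ uniformly from $\{1,\ldots,t/n\}$ and set $X_k=x_{(J-1)n+k}$. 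Then $H_0(G_n(S))=H(X_1,\ldots,X_n)$.

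Next, apply subadditivity of joint entropy \citep{cover2006elements}, which gives $H(X_1,\ldots,X_n)\leq\sum_{k=1}^n H(X_k)$. Each $X_k$ has the empirical distribution $q_k$ of the symbols in position $k$ (mod $n$) of $S$. These $n$ position classes each contain exactly $t/n$ symbols, so $\widehat p_S=\frac1n\sum_k q_k$.

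The key step is comparing $\sum_k H(q_k)$ with $nH_0(S)$. This is where the argument could go wrong if one naively identified the marginals with $\widehat p_S$. In general the marginals differ, but concavity of entropy (the same fact used in Lemma~\ref{lem:stream-partition-bound}) applies with equal weights $1/n$. It gives $\frac1n\sum_k H(q_k)\leq H\bigl(\frac1n\sum_k q_k\bigr)=H_0(S)$. Equivalently, this step is Lemma~\ref{lem:stream-partition-bound} applied to the refinement of $S$ into its $n$ residue-class subsequences. That lemma speaks of contiguous chunks, but its proof only uses the mixture identity of symbol counts, so it carries over to these subsequences.

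Chaining the inequalities gives $H_0(G_n(S))\leq\sum_k H(q_k)\leq nH_0(S)$. Multiplying by $|S|/n$ and summing over $S\in\mathcal P$ yields $C_0^{(n)}(\mathcal P)\leq C_0(\mathcal P)$. The case $n=1$ is trivially an equality.
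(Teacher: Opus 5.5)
Your proposal is correct and follows the paper's proof essentially step for step. A uniformly drawn tuple realizes the empirical tuple law, subadditivity bounds the joint entropy by the sum of the positional marginal entropies, and concavity together with the mixture identity $\widehat p_S=\frac1n\sum_k q_k$ bounds that sum by $nH_0(S)$, after which you sum over $\mathcal P$.
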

\begin{proof}
Draw a tuple uniformly from $G_n(S)$, and let $X_b$ be
its $b$-th component with marginal distribution $p_b$. The joint law is
the empirical tuple distribution, while
$\widehat p_S=n^{-1}\sum_{b=1}^n p_b$. By joint-entropy subadditivity and
concavity, the cost of each sequence satisfies
\[
\begin{aligned}
\frac{|S|}{n}H_0^{(n)}(S)
&\leq\frac{|S|}{n}\sum_{b=1}^n H(p_b)\\
&\leq |S|H_0(S).
\end{aligned}
\]
We sum over $S\in\mathcal P$ to obtain the stated bound.
\end{proof}

For the non-overlapping construction above, we keep any remaining symbols as a raw tail. The next theorem combines both lemmas into a single ideal bound for refined tensors and packed exponents.

\begin{theorem}[Ideal bound]
\label{teo:packed-clustered-bound}
Let $\mathcal P_C$ refine $\mathcal P$ by splitting sequences into chunks.
If every refined sequence has a length divisible by
an integer $n\geq1$, then $C_0^{(n)}(\mathcal P_\mathcal{C})\leq C_0(\mathcal P)$.
\end{theorem}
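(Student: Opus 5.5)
The plan is to chain the two lemmas through the intermediate quantity $C_0(\mathcal P_\mathcal{C})$:
\[
C_0^{(n)}(\mathcal P_\mathcal{C})\;\leq\; C_0(\mathcal P_\mathcal{C})\;\leq\; C_0(\mathcal P).
\]
For the right inequality I will invoke Lemma~\ref{lem:stream-partition-bound} directly. The hypothesis of the theorem is exactly that $\mathcal P_\mathcal{C}$ refines $\mathcal P$ by splitting sequences into ordered chunks, with empty chunks omitted, which is what that lemma requires. No divisibility assumption enters at this step.

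For the left inequality I will apply Lemma~\ref{lem:exponent-ngram-bound}, but with the partition $\mathcal P_\mathcal{C}$ in place of $\mathcal P$. Its only hypothesis is that $n$ divides the length of every sequence in the partition under consideration. The theorem assumes precisely this for every refined sequence, so the lemma gives $C_0^{(n)}(\mathcal P_\mathcal{C})\leq C_0(\mathcal P_\mathcal{C})$.

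I should confirm that each lemma is stated for an arbitrary partition into non-empty ordered subsequences, so that it can be applied to $\mathcal P_\mathcal{C}$. The definitions of $C_0$ and $C_0^{(n)}$ make no reference to how a partition was produced, so this holds. Concatenating the two inequalities then gives the claim.

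The only point needing care is that divisibility is required of the refined chunks and not of the original sequences in $\mathcal P$. This is why the packing lemma must be applied after refinement rather than before. It also means $C_0^{(n)}(\mathcal P)$ never needs to be defined, and there is no raw tail to account for: the divisibility hypothesis removes it, since the tail convention mentioned before the theorem is not part of $C_0^{(n)}$. There is no genuine obstacle; the proof is a two-line composition of the lemmas.
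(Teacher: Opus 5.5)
Your proposal is correct and matches the paper's proof. The paper also chains $C_0^{(n)}(\mathcal P_\mathcal{C})\leq C_0(\mathcal P_\mathcal{C})\leq C_0(\mathcal P)$, applying Lemma~\ref{lem:exponent-ngram-bound} to the refined partition under the divisibility hypothesis and then Lemma~\ref{lem:stream-partition-bound}.
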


\begin{proof}
By the assumption of divisibility, Lemmas~\ref{lem:exponent-ngram-bound}
and~\ref{lem:stream-partition-bound} yield
$C_0^{(n)}(\mathcal P_\mathcal{C})\leq C_0(\mathcal P_\mathcal{C})\leq C_0(\mathcal P)$.
\end{proof}

The bound excludes raw tails, escapes, integer code lengths, headers and
metadata, local-Huffman and raw fallbacks, and clustering costs. We must therefore measure
the practical advantages of size and time.

\begin{figure*}[t]
\centering
\begin{minipage}[t]{0.48\textwidth}
\vspace{0pt}
\centering
\begin{tikzpicture}[
  font=\normalfont\scriptsize,
  block/.style={draw=black!65,minimum width=11mm,minimum height=6mm,inner sep=0pt},
  tree node/.style={circle,draw,minimum size=2.6mm,inner sep=0pt},
  connection/.style={line width=0.35pt},
  link/.style={-{Latex[length=1.3mm]},line width=0.5pt}]
\node[anchor=east] at (-0.15,-0.3) {Tensor 1};
\node[anchor=east] at (-0.15,-2.5) {Tensor 2};
\foreach \id/\x/\y/\shade/\number in {
  a/0.55/0/red/1,b/1.65/0/blue/2,c/2.75/0/red/3,
  d/0.55/-0.6/green/4,e/1.65/-0.6/red/5,f/2.75/-0.6/blue/6,
  g/0.55/-2.2/green/1,h/1.65/-2.2/red/2,
  i/0.55/-2.8/red/3,j/1.65/-2.8/green/4}{
  \node[block,fill=\shade!25] (\id) at (\x,\y) {Chunk \number};
}
\foreach \shade/\y in {red/0.7,blue/-1.3,green/-3.0}{
  \node[tree node,draw=\shade!70!black,fill=\shade!35] (\shade) at (5.5,\y) {};
  \node[tree node,draw=\shade!70!black,fill=\shade!35] (\shade-left) at (5.0,\y-0.65) {};
  \node[tree node,draw=\shade!70!black,fill=\shade!35] (\shade-right) at (6.0,\y-0.65) {};
  \draw[\shade!70!black] (\shade) -- (\shade-left);
  \draw[\shade!70!black] (\shade) -- (\shade-right);
}
\foreach \shade/\y in {red/0.7,blue/-1.3,green/-3.0}{
  \coordinate (\shade-join) at (4.65,\y);
  \draw[link,\shade!70!black] (\shade-join) -- (\shade.west);
}
\draw[connection,red!70!black] (a.north)
  .. controls (0.55,1.0) and (3.7,0.7) .. (red-join);
\draw[connection,red!70!black] (c.north)
  .. controls (2.75,0.65) and (3.9,0.7) .. (red-join);
\draw[connection,red!70!black] (e.south)
  .. controls (1.65,-1.25) and (4.1,-1.0) .. (red-join);
\draw[connection,red!70!black] (h.east)
  .. controls (3.1,-2.2) and (4.2,-0.2) .. (red-join);
\draw[connection,red!70!black] (i.south)
  .. controls (0.55,-3.65) and (4.4,-3.65) .. (red-join);
\draw[connection,blue!70!black] (b.north)
  .. controls (1.65,1.4) and (4.2,0.5) .. (blue-join);
\draw[connection,blue!70!black] (f.east)
  .. controls (3.9,-0.6) and (4.0,-1.3) .. (blue-join);
\draw[connection,green!70!black] (d.south)
  .. controls (0.55,-1.4) and (4.0,-1.0) .. (green-join);
\draw[connection,green!70!black] (g.north)
  .. controls (0.55,-1.55) and (3.8,-1.8) .. (green-join);
\draw[connection,green!70!black] (j.south)
  .. controls (1.65,-3.4) and (3.9,-3.0) .. (green-join);
\end{tikzpicture}
\caption{Sharing codebook example.}
\label{fig:shared-codebook-mechanism}
\end{minipage}\hfill
\begin{minipage}[t]{0.48\textwidth}
\vspace{0pt}
\centering
\begin{tikzpicture}[
  font=\normalfont\scriptsize,
  exponent/.style={draw=black!65,minimum width=6.5mm,minimum height=5mm,inner sep=1pt},
  codebook/.style={draw=black!65,rounded corners=2pt,align=center,inner sep=5pt}]
\foreach \i/\value/\shade in {
  0/127/red,1/127/red,2/125/green,3/126/blue,4/127/blue,
  5/125/green,6/127/red,7/127/red,8/126/blue,9/127/blue}{
  \node[exponent,fill=\shade!20] at (0.65*\i,0) {\value};
}
\foreach \x in {0.325,4.225}{
  \node[red!70!black] at (\x,-0.48) {$\underbrace{\hspace{11mm}}_{\mathrm{B}}$};
}
\foreach \x in {1.3,3.25}{
  \node[green!50!black] at (\x,-0.48) {$\underbrace{\hspace{4.5mm}}_{\mathrm{C}}$};
}
\foreach \x in {2.275,5.525}{
  \node[blue!70!black] at (\x,-0.48) {$\underbrace{\hspace{11mm}}_{\mathrm{E}}$};
}
\node[codebook,anchor=north] (book) at (2.925,-0.95)
  {\textbf{Codebook}\\[2pt]
   \begin{tabular}{ccc}
   Token & Meaning & Code\\
   \textcolor{red!70!black}{A} & $127$ & \textcolor{red!70!black}{\texttt{110}}\\
   \textcolor{red!70!black}{B} & $(\mathrm{A},\mathrm{A})$ & \textcolor{red!70!black}{\texttt{00}}\\
   \textcolor{green!50!black}{C} & $125$ & \textcolor{green!50!black}{\texttt{01}}\\
   \textcolor{blue!70!black}{D} & $126$ & \textcolor{blue!70!black}{\texttt{111}}\\
   \textcolor{blue!70!black}{E} & $(\mathrm{D},\mathrm{A})$ & \textcolor{blue!70!black}{\texttt{10}}
   \end{tabular}};
\node[anchor=north] at ([yshift=-3mm]book.south) {Compressed stream:
  \textcolor{red!70!black}{\texttt{00}}\,|\,
  \textcolor{green!50!black}{\texttt{01}}\,|\,
  \textcolor{blue!70!black}{\texttt{10}}\,|\,
  \textcolor{green!50!black}{\texttt{01}}\,|\,
  \textcolor{red!70!black}{\texttt{00}}\,|\,
  \textcolor{blue!70!black}{\texttt{10}}};
\end{tikzpicture}
\caption{Packed exponents example.}
\label{fig:packed-exponent-mechanism}
\end{minipage}
\end{figure*}


\section{Our Method: NeuralZip}
\label{sec:paper-method}
\subsection{A reusable setup}

The setup builds and freezes codebooks before compression. It clusters chunks and packs adjacent exponents. For transfer, it also builds a bank of source-only codebooks to handle changes in exponent distributions. Appendix~\ref{tab:setup-cost} reports the setup costs and computational complexity analysis. 


\paragraph{Codebook sharing.}
We divide the tensors into chunks of at most
$b=256\,$KiB of the source weights. We assign each chunk to the nearest compatible centroid using its
normalized exponent histogram when
$D_{\rm JS}\leq\tau=0.05$. Otherwise, we start a cluster (we show this threshold selection in Appendix~\ref{tab:ablation-js-threshold}). We weight centroids
by the number of values. Compatibility requires the same data type and
exponent layout. Therefore, different chunks within a tensor can use different
codebooks, while chunks from different tensors can share one codebook
(See Figure~\ref{fig:shared-codebook-mechanism}).

\paragraph{Packed Exponents.}
For each cluster vocabulary $V$, we select up to 254 frequent packed symbols,
each formed from two adjacent exponents, counting all positions without crossing chunk boundaries. We augment
the singleton alphabet $\Sigma_E$ with packed symbols. We remove a packed symbol
when its code is longer than the two singleton codes, and recompute token counts and the
table until no further packed symbols need removal.
For a chunk $S=(e_1,\ldots,e_t)$, we start parsing at $i=1$.
If $i<t$ and $(e_i,e_{i+1})\in V$, we emit the corresponding packed symbol
and advance by two. Otherwise, we emit the singleton $e_i$ and advance
by one. Packed alignment can change after a singleton. We also compress any final unmatched
exponent individually (See Figure~\ref{fig:packed-exponent-mechanism}).

\paragraph{Codebook preparation.}
We sample up to $M=8\,$ MiB of exponents per cluster, keeping chunk boundaries.
The adaptive scan counts tokens, with pseudocount one for each singleton and
selected packed symbol. Up to four chunks spread across the source cluster
estimate payload and fragment-record bytes, including raw fallback. We scale
this estimate to the cluster and add codebook and vocabulary bytes. We keep the
pruned mixed table only if its estimated archive is smaller than the singleton
alternative. Otherwise, we use singleton coding or raw storage if no usable
table exists. Sample timings do not affect this choice. We then freeze the
tables and vocabulary.

For transfer, we build a codebook bank from up to eight of the largest source
JS clusters. Source mantissas let us estimate the exponent changes caused by
scaling by $2^f$, where $f\in\{0,1/8,\ldots,7/8\}$. We combine these with integer
exponent shifts from $-8$ to $8$. The bank stores distinct packed tables and
their singleton fallbacks. Before building mixed tables, we add 1\% of the
calibrated singleton counts to the token counts. We use only source data and
do not change the source weights.

\subsection{Compression and exact decompression}
During compression, we read current weights, reversibly extract their exponents, and
preserve sign and mantissa bits verbatim. The adaptive scan encodes both packed
and singleton symbols using the chunk's assigned shared Huffman table.

We can compress chunks in parallel using immutable tables. We store unmapped chunks
raw. We also store a mapped chunk raw if its compressed exponent
payload plus fragment record is not below $\rho=0.95$ times the raw exponent
size, without building a local table.
We record raw flags, lengths, codebook IDs, and tensor layouts in the archive,
including each used codebook and packed vocabulary once.

During decompression, we read raw chunks or their prefix-free Huffman streams. We
restore one exponent from each singleton and two from each packed symbol. Stored exponent counts and frame
lengths delimit chunks. We recombine exponents with the preserved bits to restore
the original floating-point bit patterns.

\subsection{GPU Inference}
\label{sec:paper-gpu-method}
For the GPU path, we keep coded exponents,
verbatim sign/mantissa bits, and lookup/restart
metadata on the GPU. Before each transformer block, we launch a
batched PyTorch CUDA decompressor through a hook. We use prefix lookups to recover exponents and code lengths,
and restart offsets to support parallel decompression. We reconstruct the exact weights
in a reused workspace across blocks and pass them to standard matrix
multiplications. We leave unselected parameters uncompressed and pay the decompression cost on each forward
pass.

\subsection{Setup transfer}

We reuse the source's tensor and global codebooks, but not its chunk assignments.
Before each CPU compression call, we test these default tables on 16 short windows
spread across the model's parameters. Each window contains up to 256 exponents,
parsed with the same greedy rule used for compression. If the average estimated
cost exceeds $\gamma=3$ bits per exponent, we choose codebooks from the bank for
each chunk. We sample about 1024 exponents per chunk to rank the tables, then
compare up to four packed candidates with a singleton fallback. We choose the
one with the lowest estimated cost. The measured compression time includes both
this test and codebook selection. Target samples only choose among frozen
codebooks. They do not build new tables or vocabularies. Raw fallback remains
available. Compression can worsen as exponent distributions
change, but reconstruction remains bit-wise exact.

\begin{table*}[!t]
\centering\small
\caption{\textbf{Model compression and setup reuse. }}
\label{tab:paper-transfer}
\footnotesize
\setlength{\tabcolsep}{1.5pt}
\resizebox{\textwidth}{!}{%
\begin{tabular}{@{}llrrrrrr@{}}
\toprule
Model & Compressor &
\shortstack{Setup\\(s) $\downarrow$} &
\shortstack{Compression\\(s) $\downarrow$} & \shortstack{Exponent\\ratio\\(\%) $\downarrow$} & \shortstack{Total\\memory\\(GB) $\downarrow$} & \shortstack{GPU\\memory\\(GiB) $\downarrow$} & \shortstack{Inference\\(ms) $\downarrow$} \\
\midrule
Qwen 3.5 0.8B & BF16 & -- & -- & 100.000 & 1.747 & 1.625 & 110.58 \\
{\scriptsize\citep{qwen2026qwen35}} & \zipnn{} & -- & 1.889 & 33.256 & 1.164 & -- & -- \\
 & \fast{} & \textbf{15.732} & 0.399 & 34.494 & 1.175 &  \textbf{1.295}&  \textbf{115.53}\\
 & \ratio{} & 42.415 & 0.457 & 33.700 & 1.168 &  1.383&  118.74\\
 & \method{} (own) & 17.976 & 0.440 & \textbf{32.985} & \textbf{1.162} & 1.297 & 120.82 \\
 & \method{} (from Qwen 3.5 2B) & -- & \textbf{0.292} & 33.498 & 1.166 & 1.302 & 118.80 \\
 & \dfloat{} & -- & 276.628 & 35.997 & 1.188 & 1.327 & 118.21 \\
\midrule
Gemma 3 1B & BF16 & -- & -- & 100.000 & 2.000 &  1.920&  106.92\\
{\scriptsize\citep{gemmateam2025gemma3}} & \zipnn{} & -- & 1.500 & 32.972 & 1.330 & -- & -- \\
 & \fast{} & \textbf{14.141} & \textbf{0.282} & 34.565 & 1.346 &  \textbf{1.517}&  \textbf{104.22}\\
 & \ratio{} & 46.717 & 0.351 & 33.441 & 1.334 &  1.620&  108.01\\
 & \method{} (own) & 15.067 & 0.330 & \textbf{32.724} & \textbf{1.327} &  1.518&  105.59\\
  & \method{} (from Gemma 3 4B) & -- & 0.335 & 35.504 & 1.355 &  1.537&  104.97\\
 & \dfloat{} & -- & 313.147 & 35.531 & 1.355 &  1.532&  117.61\\
\midrule
DINOv3 ConvNeXt-B & BF16 & -- & -- & 100.000 & 0.175 &  0.172&  23.28\\
{\scriptsize\citep{simeoni2025dinov3}} & \zipnn{} & -- & 0.408 & 33.943 & \textbf{0.117} & -- & -- \\
 & \fast{} & \textbf{2.432} & \textbf{0.029} & 35.505 & 0.119 &  0.138&  \textbf{15.97}\\
 & \ratio{} & 4.215 & 0.035 & 34.294 & 0.118 &  0.151&  17.78\\
 & \method{} (own) & 2.898 & 0.033 & \textbf{33.687} & \textbf{0.117} &  0.139&  16.80\\
  & \method{} (from DINOv3 ConvNeXt-L) & -- & 0.035 & 33.844 & \textbf{0.117} &  0.140&  16.67\\
 & \dfloat{} & -- & 29.020 & 36.909 & 0.120 &  \textbf{0.137}&  16.36\\
\midrule
DINOv3 ViT-B/16 & BF16 & -- & -- & 100.000 & 0.171 & 0.176 & 20.32 \\
{\scriptsize\citep{simeoni2025dinov3}} & \zipnn{} & -- & 0.305 & 33.704 & 0.115 & -- & -- \\
 & \fast{} & \textbf{1.315} & \textbf{0.026} & 35.415 & 0.116 &  0.134&  18.87\\
 & \ratio{} & 4.050 & 0.034 & 34.196 & 0.115 &  0.146&  19.72\\
 & \method{} (own) & 1.768 & 0.031 & \textbf{33.452} & \textbf{0.114} & 0.134 & \textbf{17.80} \\
  & \method{} (from DINOv3 ViT-L/16) & -- & 0.033 & 34.158 & 0.115 & 0.135 & 19.46 \\
 & \dfloat{} & -- & 27.755 & 36.559 & 0.117 & \textbf{0.132} & 18.30 \\
\midrule
SigLIP 2 Base & BF16 & -- & -- & 100.000 & 0.750 &  0.723&  41.48\\
{\scriptsize\citep{tschannen2025siglip2}} & \zipnn{} & -- & 0.779 & 36.870 & 0.514 & -- & -- \\
 & \fast{} & \textbf{5.782} & \textbf{0.091} & 38.032 & 0.518 &  0.619&  \textbf{31.16}\\
 & \ratio{} & 17.697 & 0.122 & 36.139 & 0.511 &  0.646&  32.65\\
 & \method{} (own) & 7.307 & 0.111 & \textbf{35.383} & \textbf{0.508} &  0.621&  32.69\\
  & \method{} (from SigLIP 2 Large) & -- & 0.114 & 35.866 & 0.510 &  0.623&  33.71\\
 & \dfloat{} & -- & 134.630 & 40.135 & 0.526 &  \textbf{0.617}&  32.95\\
\midrule
Parakeet TDT 0.6B v3 & BF16 & -- & -- & 100.000 & 1.254 & 1.177 & 96.83 \\
{\scriptsize\citep{sekoyan2025parakeet}} & \zipnn{} & -- & 1.796 & 33.477 & 0.837 & -- & -- \\
 & \fast{} & \textbf{18.539} & \textbf{0.149} & 34.826 & 0.846 &  \textbf{0.907}&  103.24\\
 & \ratio{} & 29.336 & 0.197 & 33.902 & 0.840 &  0.986&  104.46\\
 & \method{} (own) & 20.224 & 0.183 & \textbf{33.213} & \textbf{0.835} & 0.909 & \textbf{102.31} \\
  & \method{} (from Parakeet TDT 0.6B v2) & -- & 0.188 & 33.873 & 0.840 & 0.916 & 104.41 \\
 & \dfloat{} & -- & 226.298 & 36.215 & 0.854 & 0.938 & 104.92 \\
\bottomrule
\end{tabular}
}
\end{table*}

\section{Experiments}
\label{sec:paper-experiments}
We evaluate four research questions:

\begingroup
\setlength{\topsep}{0pt}
\setlength{\partopsep}{0pt}
\setlength{\parskip}{0pt}
\begin{enumerate}
\setlength{\itemsep}{0pt}
\setlength{\parsep}{0pt}
\setlength{\parskip}{0pt}
    \item \textit{Can we accelerate exact compression with a reusable setup while preserving the compression rate?}
    \item \textit{When can we reduce the representation size through exponent packing?}
    \item \textit{Can we reuse a setup across related models or evolving weights?}
    \item \textit{Can we obtain a favorable memory–latency trade-off with compressed GPU inference?}
\end{enumerate}
\endgroup
We provide complete measurements, including unfavorable outcomes, in the appendices.

\subsection{Experimental setup}
\label{sec:paper-protocol}

We pass models in memory to each compressor, use the same pinned CPU cores, and exclude loading, disk I/O, and bit-wise verification from timing.
The model compression experiment was evaluated on a cluster with two 16-core EPYC 7343 CPUs (32 physical cores,
64 threads, and two NUMA nodes) and 1 TiB of RAM. We use 12 CPU threads for compression unless otherwise specified.
GPU benchmarks were evaluated on four ASUS Ascent GX10 desktop supercomputers configured with a 20-core Arm processor (comprising 10 Cortex-X925 and 10 Cortex-A725 cores) and an integrated NVIDIA Blackwell GPU, using 128 GB of coherent unified LPDDR5x system memory at 273 GB/s bandwidth.
We report medians after warm-up runs: 20 CPU measurements for Table~\ref{tab:paper-transfer} and ablations, 9 for Table~\ref{tab:model-compression}, and 5 GPU trials.
Setup is measured separately once.

We also report the \emph{exponent ratio}
\begin{equation}
 R_E=100\,\frac{B_{\rm symbols}+B_{\rm raw}+B_{\rm metadata}}
                         {B_{\rm original\ exponents}},
 \label{eq:paper-ratio}
\end{equation}
where smaller is better. We include raw fallback and decompression metadata and exclude verbatim sign and mantissa bits. We compute the speedup as the reference time divided by \method{} time.

\begin{figure*}[!t]
\centering
\begingroup
\definecolor{trainingZipNN}{HTML}{4C78A8}
\definecolor{trainingNeuralZip}{HTML}{E45756}
\begin{tikzpicture}
\begin{groupplot}[
  group style={group size=4 by 2, horizontal sep=0.5cm, vertical sep=0.45cm},
  width=0.205\linewidth, height=0.16\linewidth, scale only axis,
  axis lines*=left, tick align=outside, tick pos=left,
  ymajorgrids, grid style={draw=gray!30, line width=0.3pt},
  scaled ticks=false,
  ticklabel style={font=\normalfont\scriptsize, /pgf/number format/fixed},
  label style={font=\normalfont\scriptsize}, title style={font=\normalfont\scriptsize},
  every axis plot/.append style={line width=0.8pt, mark size=1pt},
  unbounded coords=jump,
  legend style={draw=none, font=\normalfont\scriptsize}, legend columns=2
]
\nextgroupplot[xmin=0, xmax=2000, xtick={0,1000,2000},
  ymin=23, ymax=37, ytick={25,30,35}, title={OLMo-2 1B}, xticklabels=\empty, ylabel={Exponent ratio (\%)}, legend to name=checkpoint-training-legend]
\addplot[trainingZipNN, mark=*]
  table[x=step_k, y=zipnn_ratio, col sep=comma] {data/plots/checkpoint-bf16-olmo2-1b.csv};
\addlegendentry{ZipNN}
\addplot[trainingNeuralZip, mark=square*]
  table[x=step_k, y=neuralzip_ratio, col sep=comma] {data/plots/checkpoint-bf16-olmo2-1b.csv};
\addlegendentry{NeuralZip}
\nextgroupplot[xmin=0, xmax=650, xtick={0,300,600},
  ymin=23, ymax=37, ytick={25,30,35}, title={Tri 1.9B}, xticklabels=\empty]
\addplot[trainingZipNN, mark=*]
  table[x=step_k, y=zipnn_ratio, col sep=comma] {data/plots/checkpoint-bf16-tri-1.9b.csv};
\addplot[trainingNeuralZip, mark=square*]
  table[x=step_k, y=neuralzip_ratio, col sep=comma] {data/plots/checkpoint-bf16-tri-1.9b.csv};
\nextgroupplot[xmin=0, xmax=150, xtick={0,75,150},
  ymin=23, ymax=37, ytick={25,30,35}, title={Pythia 1B}, xticklabels=\empty]
\addplot[trainingZipNN, mark=*]
  table[x=step_k, y=zipnn_ratio, col sep=comma] {data/plots/checkpoint-bf16-pythia-1b.csv};
\addplot[trainingNeuralZip, mark=square*]
  table[x=step_k, y=neuralzip_ratio, col sep=comma] {data/plots/checkpoint-bf16-pythia-1b.csv};
\nextgroupplot[xmin=0, xmax=850, xtick={0,400,800},
  ymin=23, ymax=37, ytick={25,30,35}, title={Gaperon 1.5B}, xticklabels=\empty]
\addplot[trainingZipNN, mark=*]
  table[x=step_k, y=zipnn_ratio, col sep=comma] {data/plots/checkpoint-bf16-gaperon-1.5b.csv};
\addplot[trainingNeuralZip, mark=square*]
  table[x=step_k, y=neuralzip_ratio, col sep=comma] {data/plots/checkpoint-bf16-gaperon-1.5b.csv};
\nextgroupplot[xmin=0, xmax=2000, xtick={0,1000,2000},
  ymin=0, ymax=2.5, ytick={0,0.5,1,1.5,2,2.5},
  ylabel={Compression time (s)}]
\addplot[trainingZipNN, mark=*]
  table[x=step_k, y=zipnn_seconds, col sep=comma] {data/plots/checkpoint-bf16-olmo2-1b.csv};
\addplot[trainingNeuralZip, mark=square*]
  table[x=step_k, y=neuralzip_seconds, col sep=comma] {data/plots/checkpoint-bf16-olmo2-1b.csv};
\nextgroupplot[xmin=0, xmax=650, xtick={0,300,600},
  ymin=0, ymax=2.5, ytick={0,0.5,1,1.5,2,2.5}, yticklabels=\empty]
\addplot[trainingZipNN, mark=*]
  table[x=step_k, y=zipnn_seconds, col sep=comma] {data/plots/checkpoint-bf16-tri-1.9b.csv};
\addplot[trainingNeuralZip, mark=square*]
  table[x=step_k, y=neuralzip_seconds, col sep=comma] {data/plots/checkpoint-bf16-tri-1.9b.csv};
\nextgroupplot[xmin=0, xmax=150, xtick={0,75,150},
  ymin=0, ymax=2.5, ytick={0,0.5,1,1.5,2,2.5}, yticklabels=\empty]
\addplot[trainingZipNN, mark=*]
  table[x=step_k, y=zipnn_seconds, col sep=comma] {data/plots/checkpoint-bf16-pythia-1b.csv};
\addplot[trainingNeuralZip, mark=square*]
  table[x=step_k, y=neuralzip_seconds, col sep=comma] {data/plots/checkpoint-bf16-pythia-1b.csv};
\nextgroupplot[xmin=0, xmax=850, xtick={0,400,800},
  ymin=0, ymax=2.5, ytick={0,0.5,1,1.5,2,2.5}, yticklabels=\empty]
\addplot[trainingZipNN, mark=*]
  table[x=step_k, y=zipnn_seconds, col sep=comma] {data/plots/checkpoint-bf16-gaperon-1.5b.csv};
\addplot[trainingNeuralZip, mark=square*]
  table[x=step_k, y=neuralzip_seconds, col sep=comma] {data/plots/checkpoint-bf16-gaperon-1.5b.csv};
\end{groupplot}
\path (group c1r2.south west) -- node[midway, below=0.55cm]
  {Step ($\times 10^3$)} (group c4r2.south east);
\node[anchor=south, yshift=0.35cm] at (current bounding box.north)
  {\pgfplotslegendfromname{checkpoint-training-legend}};
\end{tikzpicture}
\endgroup
\caption{\textbf{Compression across training checkpoints.}
We keep each family's earliest setup frozen.}
\label{fig:paper-checkpoints}
\end{figure*}

\subsection{Model compression}
The results in Tables~\ref{tab:paper-transfer} and \ref{tab:model-compression} are consistent. Compression is 1.81--21.33$\times$ faster than the fastest baseline when the setup is already available, while reductions in the exponent storage ratio range from -0.22 to 1.49 percentage points. We see higher speedups in convolutional models, at 12.24--21.33$\times$, followed by MoE models. These results indicate that the benefits of reusing codebooks vary across architectures, there is no universal pattern that allows for obtaining the same speedups every time. Instead, the results will depend on the extent of exponent heterogeneity and local exponent redundancy in the networks.

\subsection{Reusing setups}
\label{sec:paper-transfer}
Compression speedups require a precomputed setup, whose cost increases as models scale. In a model-storage setting, setups can be built once and shared with compatible targets. Using a precomputed setup from a similar model provides compression-time gains of between 1.77 and 16.65$\times$ for a 0.01--4.90 percentage-point increase in the exponent storage ratio. When a compatible frozen setup is already available, the target avoids setup construction. The source construction cost has still been incurred, and target codebook selection is included in compression time. Table~\ref{tab:model-compression} also shows cases where setup transfer worsens compression efficiency: the exponent ratio increases by up to 4.90 percentage points. Choosing a compatible source setup therefore remains important for a model hub.

\subsection{Sharing and packing contribution}

We also evaluate the two mechanisms of \method{} in isolation. \fast{} uses clustering and precomputed codebooks without packing, while \ratio{} precomputes packed codebooks for every chunk. \fast{} usually compresses faster than \method{}, but has a higher exponent storage ratio. \ratio{} improves on \fast{}'s ratio but requires more setup and compression time. Among the evaluated variants, combining both mechanisms in \method{} gives the lowest exponent storage ratio, but does not always minimize running time. Appendix~\ref{tab:fast-ratio-trade-off} shows that \ratio{} has substantial metadata costs, which sharing codebooks reduces. The mechanisms therefore complement one another: packing reduces payload size, while sharing reduces codebook overhead.

\subsection{Compressed GPU inference}
\label{sec:paper-gpu}

Tables~\ref{tab:paper-transfer} and \ref{tab:model-compression} report GPU runs:
five-trial medians of fifty synchronized forwards after twenty warm-ups. We verify weights and outputs bit-by-bit against
BF16. The runs reduce live memory up to
27.5\% with their own setups.
Own inference ranges from 11.8\% faster to 8.3\% slower than \dfloat{}. Therefore, fast decompression allows \method{} to achieve competitive inference latency while maintaining fast compression.


\subsection{Training checkpoints storage}

Setup transfer offers another possible use case: storing training checkpoints. Successive checkpoints may retain sufficiently similar exponent statistics for a setup to remain reusable. We compress public intermediate training checkpoints from OLMo-2 1B \citep{olmo2024olmo2}, Tri 1.9B \citep{trillionlabs2025tri19}, Pythia 1B \citep{biderman2023pythia}, and Gaperon 1.5B \citep{godey2025gaperon} (See Figure~\ref{fig:paper-checkpoints}).
We construct the setup of \method{} from each family's earliest available checkpoint and keep it frozen for subsequent checkpoints.
\method{} compresses 1.33--3.74$\times$ faster than \zipnn{} while maintaining similar exponent storage ratios. At later checkpoints, ratios can worsen as exponent distributions drift. Periodically refreshing the setup may help in long training runs, but these experiments do not establish a refresh schedule.


\section{Discussion and conclusion}
\label{sec:paper-discussion}
We show that reusable statistical preparation can accelerate exact compression at a similar
representation size. The heterogeneity of exponents motivates sharing codebooks,
while local exponent redundancy offers size improvements. These observations may motivate further work on reusable statistical structures for lossless compression. Deployment must account for setup construction and assess whether a precomputed setup is statistically compatible with the target weights during training and transfer.

Although our work focuses on faster compression, exponent skew has already been extensively exploited in prior work. Future work could examine source setup selection and refresh criteria while preserving bit-wise exact reconstruction.

\label{paper:last-main-page}
\subsection*{AI use statement}
In this work, we used generative AI tools for method implementation.
We have not used generative AI tools for synthetic-data generation, and qualitative analysis of human-subject data is not applicable to this work.
Additionally, we used generative AI tools for drafting and editing text, preparing figures and tables, and organizing reproducibility materials.
We have reviewed all AI-assisted work.
AI-assisted code was checked using lossless round-trip tests, bit-wise weight and output comparisons, input and setup integrity checks, and validation of recorded measurements.
We take responsibility for the final content of this work, including text, claims, or artifacts produced with the aid of generative AI.
\ifdefined\arxivversion
\subsection*{Acknowledgements}
This work was supported by the National Center for Artificial Intelligence, CENIA FB210017, Basal ANID.
\fi
\bibliography{references}
\bibliographystyle{apalike}

\ifdefined\arxivversion\else
\section*{Checklist}

\begin{enumerate}

  \item For all models and algorithms presented, check if you include:
  \begin{enumerate}
    \item A clear description of the mathematical setting, assumptions, algorithm, and/or model. [\textbf{Yes}/No/Not Applicable]
    \item An analysis of the properties and complexity (time, space, sample size) of any algorithm. [\textbf{Yes}/No/Not Applicable]
    \item (Optional) Anonymized source code, with specification of all dependencies, including external libraries. [\textbf{Yes}/No/Not Applicable]
  \end{enumerate}

  \item For any theoretical claim, check if you include:
  \begin{enumerate}
    \item Statements of the full set of assumptions of all theoretical results. [\textbf{Yes}/No/Not Applicable]
    \item Complete proofs of all theoretical results. [\textbf{Yes}/No/Not Applicable]
    \item Clear explanations of any assumptions. [\textbf{Yes}/No/Not Applicable]
  \end{enumerate}

  \item For all figures and tables that present empirical results, check if you include:
  \begin{enumerate}
    \item The code, data, and instructions needed to reproduce the main experimental results (either in the supplemental material or as a URL). [\textbf{Yes}/No/Not Applicable]
    \item All the training details (e.g., data splits, hyperparameters, how they were chosen). [\textbf{Yes}/No/Not Applicable]
    \item A clear definition of the specific measure or statistics and error bars (e.g., with respect to the random seed after running experiments multiple times). [\textbf{Yes}/No/Not Applicable]
    \item A description of the computing infrastructure used. (e.g., type of GPUs, internal cluster, or cloud provider). [\textbf{Yes}/No/Not Applicable]
  \end{enumerate}

  \item If you are using existing assets (e.g., code, data, models) or curating/releasing new assets, check if you include:
  \begin{enumerate}
    \item Citations of the creator If your work uses existing assets. [\textbf{Yes}/No/Not Applicable]
    \item The license information of the assets, if applicable. [\textbf{Yes}/No/Not Applicable]
    \item New assets either in the supplemental material or as a URL, if applicable. [\textbf{Yes}/No/Not Applicable]
    \item Information about consent from data providers/curators. [Yes/No/\textbf{Not Applicable}]
    \item Discussion of sensible content if applicable, e.g., personally identifiable information or offensive content. [Yes/No/\textbf{Not Applicable}]
  \end{enumerate}

  \item If you used crowdsourcing or conducted research with human subjects, check if you include:
  \begin{enumerate}
    \item The full text of instructions given to participants and screenshots. [Yes/No/\textbf{Not Applicable}]
    \item Descriptions of potential participant risks, with links to Institutional Review Board (IRB) approvals if applicable. [Yes/No/\textbf{Not Applicable}]
    \item The estimated hourly wage paid to participants and the total amount spent on participant compensation. [Yes/No/\textbf{Not Applicable}]
  \end{enumerate}

\end{enumerate}
\fi

\clearpage
\appendix
\onecolumn
\renewcommand{\arraystretch}{0.9}
\setlength{\abovecaptionskip}{4pt}
\setlength{\belowcaptionskip}{2pt}
\setlength{\textfloatsep}{8pt plus 2pt minus 2pt}
\setlength{\floatsep}{6pt plus 2pt minus 2pt}
\setlength{\intextsep}{6pt plus 2pt minus 2pt}
\setlength{\LTpre}{4pt}
\setlength{\LTpost}{4pt}
\thispagestyle{empty}
\aistatstitle{NeuralZip: Reusable Setup for Fast Lossless Compression\\Supplementary Materials}

\FloatBarrier

\section{Additional Experiments}
\label{app:cpu-results}



We extend the comparison to 18 models covering vision, language, audio, diffusion, and mixture-of-experts architectures. With the setup already available, \method{} compresses faster than its baselines across the reported models, generally with similar or lower exponent storage. Setup transfer avoids rebuilding codebooks on the target, but its compression ratio depends on the source model. We report each transfer source explicitly and retain cases where transfer increases storage.

\begingroup
\scriptsize
\setlength{\tabcolsep}{0.5pt}
\setlength{\LTcapwidth}{\textwidth}
\begin{longtable}{@{}llrrrrrr@{}}
\caption{\textbf{Model compression and setup reuse.}}\label{tab:model-compression}\\
\toprule
Model & Compressor & \shortstack{Setup\\(s) $\downarrow$} & \shortstack{Compression\\(s) $\downarrow$} & \shortstack{Exponent\\ratio\\(\%) $\downarrow$} & \shortstack{Total\\memory\\(GB) $\downarrow$} & \shortstack{GPU\\memory\\(GiB) $\downarrow$} & \shortstack{Inference\\(ms) $\downarrow$} \\
\midrule
\endfirsthead
\multicolumn{8}{l}{\tablename~\thetable{} (continued)}\\
\toprule
Model & Compressor & \shortstack{Setup\\(s) $\downarrow$} & \shortstack{Compression\\(s) $\downarrow$} & \shortstack{Exponent\\ratio\\(\%) $\downarrow$} & \shortstack{Total\\memory\\(GB) $\downarrow$} & \shortstack{GPU\\memory\\(GiB) $\downarrow$} & \shortstack{Inference\\(ms) $\downarrow$} \\
\midrule
\endhead
\midrule
\multicolumn{8}{@{}p{\textwidth}@{}}{}\\
\multicolumn{8}{r}{\footnotesize Continued on next page}\\
\endfoot
\bottomrule
\multicolumn{8}{@{}p{\textwidth}@{}}{}\\
\endlastfoot
ResNet-101 & BF16 & -- & -- & 100.000 & 0.089 & 0.093 & 14.73 \\*
\citep{he2016deep} & \zipnn{} & -- & 0.445 & 34.193 & \textbf{0.060} & -- & -- \\*
 & \method{} (own) & 3.449 & \textbf{0.026} & \textbf{33.776} & \textbf{0.060} & 0.071 & 16.47 \\*
 & \method{} (from ResNet-152) & -- & 0.027 & 34.129 & \textbf{0.060} & 0.073 & 16.37 \\*
 & \dfloat{} & -- & 19.216 & 37.809 & 0.062 & \textbf{0.070} & \textbf{15.78} \\
\midrule
ConvNeXt-Base & BF16 & -- & -- & 100.000 & 0.177 &  0.174&  23.43\\*
\citep{liu2022convnet} & \zipnn{} & -- & 0.554 & 33.045 & \textbf{0.118} & -- & -- \\*
 & \method{} (own) & 2.880 & 0.039 & \textbf{32.821} & \textbf{0.118} &  0.132&  16.08\\*
 & \method{} (from ConvNeXt-Large) & -- & \textbf{0.038} & 32.965 & \textbf{0.118} &  0.134&  16.22\\*
 & \dfloat{} & -- & 31.030 & 35.919 & 0.120 &  \textbf{0.130}&  \textbf{15.51}\\
\midrule
EfficientNet-B3 & BF16 & -- & -- & 100.000 & 0.025 &  0.031&  4.74\\*
\citep{tan2019efficientnet} & \zipnn{} & -- & 0.308 & \textbf{33.321} & \textbf{0.016} & -- & -- \\*
 & \method{} (own) & 2.212 & \textbf{0.014} & 33.543 & 0.017 &  0.028&  6.47\\*
 & \method{} (from EfficientNet-B7) & -- & 0.084 & 34.723 & 0.017 &  0.029&  6.60\\*
 & \dfloat{} & -- & 4.632 & 39.305 & 0.017 &  \textbf{0.027}&  \textbf{5.98}\\
\midrule
GPT-2 & BF16 & -- & -- & 100.000 & 0.249 &  0.247&  10.72\\*
\citep{radford2019language} & \zipnn{} & -- & 0.294 & 33.174 & 0.166 & -- & -- \\*
 & \method{} (own) & 2.281 & \textbf{0.044} & \textbf{32.891} & \textbf{0.165} &  0.206&  9.94\\*
 & \method{} (from GPT-2 Medium) & -- & 0.050 & 34.311 & 0.167 &  0.207&  9.97\\*
 & \dfloat{} & -- & 39.547 & 35.869 & 0.169 &  \textbf{0.205}&  \textbf{9.71}\\
\midrule
BERT Base & BF16 & -- & -- & 100.000 & 0.220 & 0.221 & 20.77 \\*
\citep{devlin2019bert} & \zipnn{} & -- & 0.349 & 32.598 & \textbf{0.146} & -- & -- \\*
 & \method{} (own) & 1.883 & \textbf{0.040} & \textbf{32.353} & \textbf{0.146} & 0.178 & 16.89 \\*
 & \method{} (from BERT Large) & -- & 0.042 & 32.937 & \textbf{0.146} & 0.179 & \textbf{15.04} \\*
 & \dfloat{} & -- & 36.998 & 35.336 & 0.149 & \textbf{0.176} & 16.97 \\
\midrule
\midrule
Qwen 2.5 7B & BF16 & -- & -- & 100.000 & 15.231 & 14.236 & 423.24 \\*
\citep{yang2024qwen25} & \zipnn{} & -- & 4.686 & 33.347 & 10.153 & -- & -- \\*
 & \method{} (own) & 267.972 & \textbf{2.568} & \textbf{32.965} & \textbf{10.127} & 11.130 & 579.56 \\*
 & \method{} (from Qwen 2.5 14B) & -- & 2.592 & 34.946 & 10.278 & 11.252 & 544.17 \\*
 & \dfloat{} & -- & 2405.024 & 36.069 & 10.362 & \textbf{11.012} & \textbf{537.46} \\
\midrule
Qwen 3.5 2B & BF16 & -- & -- & 100.000 & 4.548 & 4.130 & 168.51 \\*
\citep{qwen2026qwen35} & \zipnn{} & -- & 2.712 & 33.160 & 3.028 & -- & -- \\*
 & \method{} (own) & 38.970 & \textbf{0.698} & \textbf{32.906} & \textbf{3.023} & \textbf{3.259} & 195.74 \\*
 & \method{} (from Qwen 3.5 27B) & -- & 0.711 & 33.369 & 3.033 & 3.272 & 192.51 \\*
 & \dfloat{} & -- & 720.987 & 35.777 & 3.088 & 3.303 & \textbf{189.06} \\
\midrule
OLMo-2-7B & BF16 & -- & -- & 100.000 & 14.597 & 13.603 & 395.83 \\*
\citep{olmo2024olmo2} & \zipnn{} & -- & 5.925 & 33.619 & 9.753 & -- & -- \\*
 & \method{} (own) & 1151.162 & 2.409 & \textbf{33.021} & \textbf{9.710} & \textbf{10.455} & 531.33 \\*
 & \method{} (from OLMo-2 13B) & -- & \textbf{2.393} & 35.156 & 9.866 & 10.684 & \textbf{515.14} \\*
 & \dfloat{} & -- & 2304.563 & 37.201 & 10.014 & 10.462 & 519.05 \\
\midrule
Llama 3.2 3B & BF16 & -- & -- & 100.000 & 6.425 &  5.993&  214.55\\*
\citep{meta2024llama32} & \zipnn{} & -- & 3.139 & 32.907 & 4.270 & -- & -- \\*
 & \method{} (own) & 46.482 & \textbf{1.046} & \textbf{32.645} & \textbf{4.262} &  4.543&  265.93\\*
 & \method{} (from Llama 3.2 1B) & -- & 1.509 & 33.243 & 4.281 &  4.540&  262.73\\*
 & \dfloat{} & -- & 1005.953 & 35.499 & 4.353 &  \textbf{4.500}&  \textbf{256.91}\\
\midrule
Phi-4 14B & BF16 & -- & -- & 100.000 & 29.319 &  27.392&  659.60\\*
\citep{abdin2024phi4} & \zipnn{} & -- & 8.792 & 32.618 & 19.442 & -- & -- \\*
 & \method{} (own) & 207.349 & 4.868 & \textbf{32.348} & \textbf{19.403} &  20.013&  945.63\\*
 & \method{} (from Qwen 2.5 7B) & -- & \textbf{4.393} & 34.236 & 19.680 &  20.086&  \textbf{867.96}\\*
 & \dfloat{} & -- & 4997.372 & 35.217 & 19.822 &  \textbf{19.785}&  888.47\\
\midrule
\midrule
Gemma 3 4B & BF16 & -- & -- & 100.000 & 8.600 &  8.029&  259.32\\*
\citep{gemmateam2025gemma3} & \zipnn{} & -- & 5.966 & 32.953 & 5.717 & -- & -- \\*
 & \method{} (own) & 128.738 & \textbf{1.875} & \textbf{32.687} & \textbf{5.706} &  \textbf{6.096}&  314.46\\*
 & \method{} (from Gemma 3 12B) & -- & 2.230 & 33.742 & 5.752 &  6.140&  311.54\\*
 & \dfloat{} & -- & 1373.542 & 35.545 & 5.829 &  6.149&  \textbf{309.45}\\
\midrule
Stable Diffusion v1.5 UNet & BF16 & -- & -- & 100.000 & 1.719 &  1.623&  177.13\\*
\citep{rombach2022highresolution} & \zipnn{} & -- & 1.542 & 33.292 & 1.146 & -- & -- \\*
 & \method{} (own) & 25.691 & 0.288 & \textbf{32.969} & \textbf{1.143} &  \textbf{1.207}&  195.88\\*
\citep{podell2023sdxl} & \method{} (from SDXL Base UNet) & -- & \textbf{0.270} & 37.865 & 1.185 &  1.249&  194.34\\*
 & \dfloat{} & -- & 276.392 & 35.932 & 1.168 &  1.209&  \textbf{191.49}\\
\midrule
\midrule
Qwen3-30B-A3B & BF16 & -- & -- & 100.000 & 61.064 &  56.879&  2047.27\\*
\citep{yang2025qwen3} & \zipnn{} & -- & 73.374 & 32.685 & 40.514 & -- & -- \\*
 & \method{} (own) & 948.117 & \textbf{11.621} & \textbf{32.399} & \textbf{40.430} &  56.758&  2065.53\\*
 & \method{} (from Qwen3-30B-A3B Base) & -- & 15.836 & 32.410 & 40.434 &  56.761&  \textbf{2062.21}\\*
 & \dfloat{} & -- &  9672.456&  35.225&  41.287&  \textbf{56.734}&  2081.14\\
\midrule
DeepSeekMoE-16B & BF16 & -- & -- & 100.000 & 32.751 &  30.591&  1799.15\\*
\citep{dai2024deepseekmoe} & \zipnn{} & -- & 30.102 & 32.609 & 21.717 & -- & -- \\*
 & \method{} (own) & 182.730 & \textbf{5.569} & \textbf{32.319} & \textbf{21.671} &  22.171&  2118.82\\*
 & \method{} (from DeepSeekMoE-16B Chat) & -- & 8.624 & 32.351 & 21.676 &  22.272&  2113.50\\*
 & \dfloat{} & -- &  5103.629&  35.169&  22.135&  \textbf{21.893}&  \textbf{2067.32}\\
\midrule
DeepSeek-R1-Distill-Llama 8B & BF16 & -- & -- & 100.000 & 16.061 &  14.965&  451.71\\*
\citep{guo2025deepseekr1} & \zipnn{} & -- & 6.647 & 32.724 & 10.659 & -- & -- \\*
 & \method{} (own) & 142.823 & \textbf{2.690} & \textbf{32.452} & \textbf{10.637} &  11.572&  602.54\\*
\citep{nvidia2025nemotronnano} & \method{} (from Llama 3.1 Nemotron Nano 8B) & -- & 3.747 & 32.633 & 10.652 &  11.587&  576.20\\*
 & \dfloat{} & -- & 2558.282 & 35.297 & 10.865 &  \textbf{11.448}&  \textbf{569.47}\\
\midrule
Qwen 2.5 7B Instruct Distilled & BF16 & -- & -- & 100.000 & 15.231 & 14.236 & 408.47 \\*
\citep{rz4122025qwen25distilled} & \zipnn{} & -- & 12.079 & 33.344 & 10.153 & -- & -- \\*
\citep{yang2024qwen25} & \method{} (own) & 299.476 & 3.627 & \textbf{32.958} & \textbf{10.127} & 11.129 & 537.95 \\*
 & \method{} (from Qwen 2.5 7B) & -- & \textbf{2.591} & 33.078 & 10.136 & 11.133 & \textbf{527.77} \\*
 & \dfloat{} & -- & 2411.115 & 36.067 & 10.362 & \textbf{11.012} & 530.82 \\
\midrule
\midrule
SigLIP 2 So400m & BF16 & -- & -- & 100.000 & 2.272 & 2.148 & 162.92 \\*
\citep{tschannen2025siglip2} & \zipnn{} & -- & 2.207 & 34.983 & 1.534 & -- & -- \\*
 & \method{} (own) & 40.917 & \textbf{0.370} & \textbf{34.078} & \textbf{1.523} & \textbf{1.672} & 155.37 \\*
 & \method{} (from SigLIP 2 Giant) & -- & 0.579 & 34.456 & 1.528 & 1.679 & \textbf{153.67} \\*
 & \dfloat{} & -- & 375.631 & 37.946 & 1.567 & 1.690 & 176.16 \\
\midrule
\midrule
Whisper v3 Turbo & BF16 & -- & -- & 100.000 & 1.618 &  1.517&  327.48\\*
\citep{radford2023whisper} & \zipnn{} & -- & 1.649 & 33.861 & 1.083 & -- & -- \\*
 & \method{} (own) & 45.678 & 0.265 & \textbf{33.525} & \textbf{1.080} &  1.154&  \textbf{333.65}\\*
 & \method{} (from Whisper large-v3) & -- & \textbf{0.255} & 34.514 & 1.088 &  1.159&  333.78\\*
 & \dfloat{} & -- & 261.572 & 36.665 & 1.105 &  \textbf{1.153}&  337.13\\
\addlinespace[1pt]
\end{longtable}
\endgroup
\clearpage

\FloatBarrier
\input{appendix-ablations}
\FloatBarrier

\end{document}